\newtheorem{theorem}{Theorem}
\newtheorem{definition}[theorem]{Definition}
\newtheorem{remark}{Remark}
\numberwithin{theorem}{section}
\begin{document}

\title{Identifying Invariant Texture Violation for Robust Deepfake Detection}

\author{Xinwei Sun\thanks{Equal contribution.}\\
Microsoft Research Asia\\
{\tt\small xinsun@microsoft.com}
\and
Botong Wu$^{*}$\\
Peking University\\
{\tt\small botongwu@pku.edu.com}
\and
Wei Chen\\
Microsoft Research Asia\\
{\tt\small wche@microsoft.com}
}

\maketitle

\begin{abstract}

Existing deepfake detection methods have reported promising in-distribution results, by accessing published large-scale dataset. However, due to the non-smooth synthesis method, the fake samples in this dataset may expose obvious artifacts (\textit{e.g.}, stark visual contrast, non-smooth boundary), which were heavily relied on by most of the frame-level detection methods above. As these artifacts do not come up in real media forgeries, the above methods can suffer from a large degradation when applied to fake images that close to reality. To improve the robustness for high-realism fake data, we propose the \textbf{In}variant \textbf{Te}xture \textbf{Le}arning (InTeLe) framework, which only accesses the published dataset with low visual quality. Our method is based on the prior that the microscopic facial texture of the source face is \textbf{inevitably} violated by the texture transferred from the target person, which can hence be regarded as the invariant characterization shared among all fake images. To learn such an invariance for deepfake detection, our InTeLe introduces an auto-encoder framework with different decoders for pristine and fake images, which are further appended with a shallow classifier in order to separate out the obvious artifact-effect. Equipped with such a separation, the extracted embedding by encoder can capture the texture violation in fake images, followed by the classifier for the final pristine/fake prediction. As a theoretical guarantee, we prove the \emph{identifiability} of such an invariance texture violation, \textit{i.e.}, \emph{to be precisely inferred from observational data}. The effectiveness and utility of our method are demonstrated by promising generalization ability from low-quality images with obvious artifacts to fake images with high realism.

\end{abstract}

\section{Introduction}

The advances of computing capability and deep generative models (\textit{e.g.}, Auto-Encoder) make it possible to synthesis fake videos or images that swap facial information from the source person by the one from the target person, which is known as \textit{Deepfakes} \cite{korshunov2018deepfakes}, such as \textit{identity swap} and \textit{expression swap} \cite{rossler2019faceforensics++, tolosana2020deepfakes}. By leveraging deep convolutional neural networks, current Deepfake methods can manipulate in a microscopic level that is hard to be discerned by human detectors. The propagation of these fake images on the Internet can cause potential harm in aspects like cybersecurity, defamation of reputation, and fake news circulation. An example is the recent fake news that "Aids is over" announced by Donald Trump \footnote{Although it was originally  generated to attract the public's attention on the AIDS issue, this video can incur misleading on social media.}. To counter-mitigate these side effects, a robust \footnote{Here robustness means generalize well on high-realism fake images.} and effective detection method is desired. 



By accessing the public large-scale dataset such as FaceForensics++ \cite{rossler2019faceforensics++} which may be with low visual quality, existing methods reported promising results on the in-distribution test data. However, without meticulous post-processing during synthesis, these low-quality fake images can expose obvious artifacts such as color mismatch and stark visual contrast, which are hence far from realism. The methods trained to capture these artifacts \cite{afchar2018mesonet, li2018exposing} may not generalize well to real forgeries that either with the better visual quality or with post-compression to smooth out the artifacts. As illustrated in Fig.~\ref{fig:deepfake}, with better synthesis method adopted in \cite{li2020celeb}, the fake images in newly published Celeb-DF dataset \cite{li2020celeb} can exhibit a large improvement of visual quality. Such an improvement cause the degradation of above detection methods \cite{li2020celeb, rossler2019faceforensics++} from FaceForensics++ to Celeb-DF (0.955 $\to$ 0.655 in terms of AUC), as summarized in Tab.~\ref{tab:stat}. 

\begin{figure*}[h!]
\centering
    \includegraphics[width=0.95\textwidth]{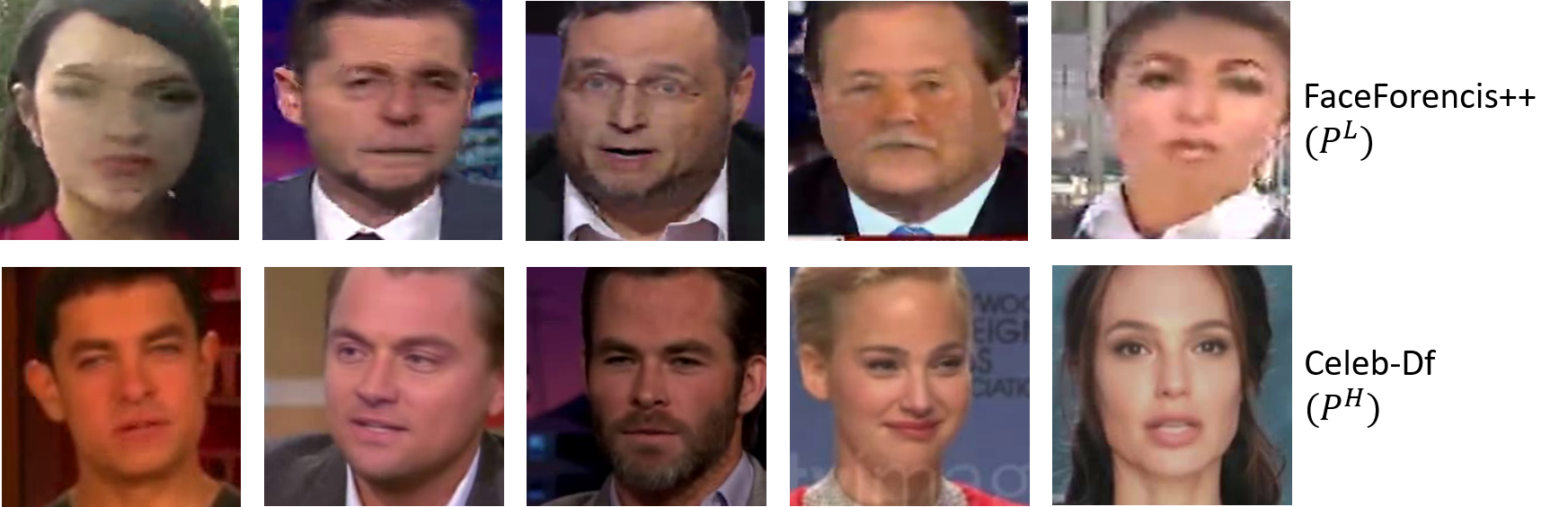}
\label{fig:deepfake}
\caption{The fake images respectively from Faceforencis++ \cite{rossler2019faceforensics++} (the 1st row) and the Celeb-DF \cite{li2020celeb} (the 2nd row). The ones in the 1st row exhibit the obvious artifacts, corresponding to the training data from $\mbox{P}^{\mathrm{L}}$ (in Def.~\ref{def:pl}) that we access. The 2nd row remove the artifacts by implementing better synthesis method ($\mbox{P}^{\mathrm{H}}$ in Def.~\ref{def:pw}), as the test arena to validate the robustness of proposed methods.
}
\end{figure*}

Apart from artifacts that may not come up in the wild (\textit{e.g.}, those vivid and circulated fake images), we in this paper exploit an important prior that invariably hold across all fake images. As an invariance, it refers to that \emph{the facial texture (\textit{i.e.}, multitude of microscopic details and surface orientation) for one person is uniquely distributed} \cite{tanaka2016parts,liu2005recognizing}. That is, different facial regions may expose heterogeneous textures though, they give rise to the holistic regularity such as geometric coherence or high-order smoothness of spatial variation. This prior has been exploited in the literature of face recognition \cite{pierrard2007skin, hadid2006face, zhang2009local}. Replacing the facial region with the one from other sources (\textit{i.e.} people) would violate the holistic regularity mentioned above. Different from the obvious artifacts due to careless synthesis, such an \emph{invariance with texture-wise violation} corresponds to high-frequent signals that cannot be perceived by human detectors, making the detection extremely difficult.

To generalize well on real forgeries from the data with low visual quality, we propose a novel framework, namely \textbf{In}variant \textbf{Te}xture \textbf{Le}arning (InTeLe) to identify this invariant texture violation. Our InTeLe is based on a latent generative model (as shown in Fig.~\ref{fig:dag}) that encapsulates two sources of clues of fake images with low quality: the artifacts during synthesis and the texture violation. Specifically, we introduce hidden variables $Z$ to model the high-level latent components that contain the texture information. Besides, the generation process of pristine images differs from the one for fake images that can suffer from artifacts. This is illustrated by the additional arrow $Y \to X$ (where $Y$ denotes the binary pristine/fake label) illustrated in the left image in Fig.~\ref{fig:dag}. For high-quality fake images with removed artifacts, this arrow does not exist (in the right image in Fig.~\ref{fig:dag}) and the generating process of $X$ no longer depends on $Y$.

To learn the invariant texture information for generalization, our InTeLe introduces an auto-encoder structure, with the encoder extracting the texture information, followed by a classifier for pristine/fake prediction. Motivated by the $Y \to X$ in the latent generative model, our decoders are with different branches for generating fake and pristine images. To model such a generating difference reflected by the artifacts, we append a shallow classifier on decoded images to regularize the decoder, leaving the invariant texture violation solely encoded into the latent components for generalization. As a guarantee, we prove that this invariant texture information can be precisely inferred from observational distribution, \textit{i.e., identifiability}. To demonstrate the robustness of InTeLe, we test it on more realistic fake data in Celeb-DF \cite{li2020celeb} and benchmark images in \cite{rossler2019faceforensics++}. The empirical results show a large-margin improvement (\textit{e.g.} $6.5\%$ AUC on Celeb-DF) compared with conventional supervised learning. The contribution is summarized as follows: 
\begin{itemize}
    \item We are \emph{the first} to consider the texture violation, as an invariance for deepfake detection.
    \item We propose an auto-encoder-based model that separates out the artifact to identify the invariance for prediction. 
    \item We prove the identifiability of the invariant texture violation, giving rise to the observational distribution.
    \item We generalize to datasets with high realism (from low-qualtiy data) and achieve better results than others. 
\end{itemize}





\section{Related Work}

 
\noindent \textbf{Video Detection.} The video-based detection method can exploit temporal discrepancies \cite{sabir2019recurrent,guera2018deepfake}, and also the physiological aspects \cite{lyu2020deepfake} including incoherent head poses \cite{yang2019exposing}, abnormal eye blinking patterns \cite{li2018ictu} or behaviours \cite{agarwal2019protecting}; the exploitation of spatial and motion information \cite{wang2020video}. \textit{However}, in many real scenarios, the image-level manipulation is enough to cause potential harm, such as cybersecurity or defaming the politician's reputation by generating inappropriate expression in some contexts. Therefore, a robust frame-based detection method is more desired. 

\noindent \textbf{Image Detection.} For detecting fake images, most existing methods \cite{li2018exposing, matern2019exploiting, afchar2018mesonet,zhou2017two, zhou2018learning, liu2018image} trained on the Faceforensics++ \cite{rossler2019faceforensics++} (as one of the first large-scale public data) to learn the obvious artifacts due to imperfect synthesis (we will elaborate the details in section~\ref{sec:lgm}). These artifacts can visually cause stark contrast, which corresponds to low-frequency signals with respect to the human observers and hence far from realism. Using shallow networks, these low-frequent signals can be captured to achieve accurate prediction \cite{afchar2018mesonet}. 
Another line of methods uses a data-driven method by directly training DNNs, such as XceptionNet \cite{rossler2019faceforensics++, nguyen2019capsule, korshunov2020deepfake}. Benefited from the ability to learn high-frequency signals by deep layers, these DNN can learn both artifacts and texture violation (\textit{but} without separation of each effect). However, these methods cannot generalize to real forgeries, \textit{e.g.}, \cite{li2020celeb} with color correction and smooth mask detection. The \cite{li2020celeb} reported this performance degradation.

\noindent \textbf{Key Difference.} Compared with existing works, our method focus on identifying the texture violation characterized by microscopic details, as an invariance shared among all fake images. Specifically, \textit{we targeting on generalizing from the Faceforensics++ which expose obvious artifacts, to Celeb-DF with fake images that are close to reality.} Particularly note that, rather than simply performing well on Celeb-DF (since we do not use data from Celeb-DF), our goal is to propose a robust model that can exploit invariant texture information for prediction. This goal is commonly believed to be the future prospects for deepfakes detection \cite{tolosana2020deepfakes,lyu2020deepfake}. Compared with conventional feature-extraction methods (\textit{e.g.}, \cite{galbally2013image}), our method is able to extract the high-frequency signals equipped with deep neural network.

\section{Methodology}


\noindent \textbf{Problem Setting} With accessible Deepfake dataset which can expose obvious artifacts due to unsatisfactory synthetic quality, we expect to generalize well on fake data with high realism. Formally speaking, our training data contains $N$ samples $\{x_i,y_i\} \overset{i.i.d}{\sim} \mbox{P}^{\mathrm{L}}(x,y)$ (with ``L" stands for low quality and $\mbox{P}^{\mathrm{L}}(X,Y)$ is defined in Def.~\ref{def:pl}), where $(x,y) \in \mathcal{X} \times \mathcal{Y}$ denote the image and pristine/fake label with the $\mathcal{Y} := \{0,1\}$ (The ``0" denotes the pristine while the ``1" denotes the fake). Our goal is to learn a \textbf{\emph{robust}} predictor $f: \mathcal{X} \to \mathcal{Y}$ that can \textbf{\emph{generalize well from $\mbox{P}^{\mathrm{L}}$} to real world media forgeries with high visual quality}, which that is characterized by $\mbox{P}^{\mathrm{H}} \neq \mbox{P}^{\mathrm{L}}$ (''H" stands for high quality and $\mbox{P}^{\mathrm{H}}$ is defined in Def.~\ref{def:pw}): $\min_f \mathbb{E}_{p^{\mathrm{H}}(x,y)} \mathbbm{1}(f(x) \neq y)$. 

\noindent \textbf{Outline.} We start with a latent generative model in section~\ref{sec:lgm} to model two effects in fake images with low visual quality: the artifacts and texture violation for $\mbox{P}^{\mathrm{L}}$. This generative model motivates the our InTeLe framework in section~\ref{sec:intele} which identifies the invariant texture violation by separating out the artifact-effect. Finally, we in section~\ref{sec:iden} provide a theoretical guarantee that this invariant texture information can be precisely inferred, \textit{i.e., identifiability}.



\subsection{Latent Generative Model}
\label{sec:lgm}

For manipulated/fake images with low visual quality, there are two clues for detection: \textit{(i)} the intrinsic texture violation from another person; and \textit{(ii)} the synthetic artifacts such as splicing boundaries, color mismatch. We assume that the \textit{(i)} is shared by all fake images due to the following prior regarding the facial texture: \textit{The facial texture, as a generalization of a multitude of microscopic details and surface orientation, is uniquely determined for each person \cite{tanaka2016parts,liu2005recognizing}}. For the manipulation that leverages the information from another person, the distribution of the source person's facial texture will be inevitably violated. To understand the \textit{(ii)} that only come up in fake images far from realism, taking the synthesis method adopted in Faceforencis++ \cite{rossler2019faceforensics++} as an illustration: after generating the target facial region from that of the source person, the \cite{rossler2019faceforensics++} implemented a warping back operation in order to match with the original face. It is this warping back procedure that induces non-smooth boundary, the color mismatch between the source person and the target person in terms of skin \cite{afchar2018mesonet}, leaving obvious artifacts.

\begin{figure}[h!]

\centering
\begin{tabular}{cccc}
   \hspace{0.0in}    
    \includegraphics[width=0.3\textwidth]{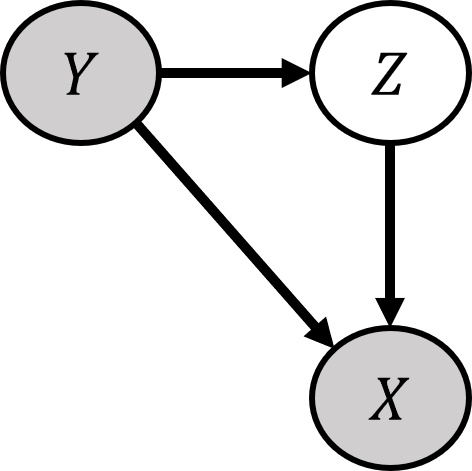} 
	& \hspace{0.3in}   \includegraphics[width=0.3\textwidth]{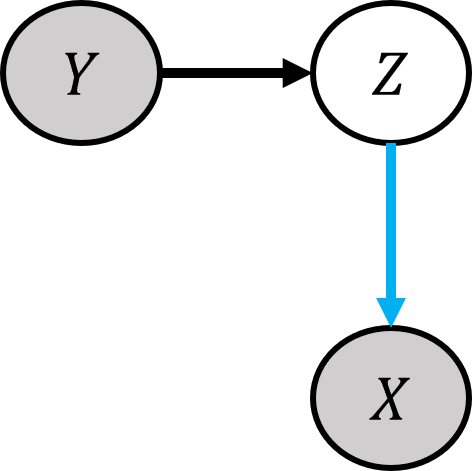} \\
	\hspace{0in}  (a) DAG for $\mbox{P}^{\mathrm{L}}$ & \hspace{0.3in}  (b) DAG for $\mbox{P}^{\mathrm{H}}$ 
\end{tabular}
\caption{Directed Acylic Graph (DAG) of $\mbox{P}^{\mathrm{L}}$ which generates low-quality fake images that expose obvious artifacts, $\mbox{P}^{\mathrm{H}}$ which generates high-quality fake images due to better synthesis process. With the artifacts removed, the generating process of $\mbox{P}^{\mathrm{H}}$ no longer depends on the pristine/fake label. The blue arrow in $\mbox{P}^{\mathrm{H}}$ means that the corresponding generating mechanism/structural equation is changed compared to that of $\mbox{P}^{\mathrm{L}}$.}
\label{fig:dag}
\end{figure}

We encapsulate these two clues into a latent generative model illustrated in the leftmost image of Fig.~\ref{fig:dag}. As shown, the hidden variable $Z \in \mathbb{R}^{d_z}$ is introduced to model the explanatory factors of facial image $X$; these explanatory factors encode the texture information of each local facial region such as five sense organs, hair, beard, \textit{etc}. The obvious artifact (\textit{i.e., (ii)}) used to differentiate the fake images is modeled by the dependency on the label $y$ during the generating process. The existing method either exploited the artifacts (\textit{a.k.a, (i)}) or both \textit{(i)} and \textit{(ii)}; and could achieve high detection recall on $\mbox{P}^{\mathrm{L}}$. However, these methods cannot generalize well on high-realism fake images from $\mbox{P}^{\mathrm{H}}$ which removes the artifact via better synthesis method or compression and only expose microscopic texture violation which is not easy to be detected by human detectors. Due to such a removal of the artifact, we assume that the generating process of $\mbox{P}^{\mathrm{H}}$ only depends on the latent variable, as commonly assumed in the literature of latent generative model \cite{kingma2014auto,suter2019robustly}. This is shown by the missing link from $Y \to X$ in the middle image of Fig.~\ref{fig:dag}, and the $\mbox{P}^{\mathrm{H}}(x|z) = \mbox{P}^{\mathrm{L}}(x|z,y=0)$ (recall that the $y=0$ denotes the pristine image). The formal definition of $\mbox{P}^{\mathrm{L}},\mbox{P}^{\mathrm{H}}$ are introduced in Def.~\ref{def:pl},~\ref{def:pw}.


\begin{definition}[Latent Generative Model for $\mbox{P}^{\mathrm{L}}$]
\label{def:pl}
The generative model \cite{pearl2009causality} of $\mbox{P}^{\mathrm{L}}$ is defined by (i) a Directed Acylic Graph (DAG) as illustrated in the leftmost image in Fig.~\ref{fig:dag} and denoted as $G = (\mathcal{V},\mathcal{E})$ with $\mathcal{V} := (X,Y,Z)$ containing the input $X$, output $Y$ and latent variables $Z$ and $\mathcal{E} := \{Y \to Z, Y \to X, Z \to X\}$ characterizing the generating direction; and (ii) generating distributions $\left( p_{f_y}(y), p_{f_z}(z|y), p_{f_x}(x|z,y) \right)$ for each element in $\mathcal{V}$. 
\end{definition}

Note that this generative model differs from the commonly adopted graphical model with $Z \to X$ in \cite{kingma2014semi} in the additional arrow $Y \to X$, implying the difference in the generating process for pristine images and fake images. Specifically, this difference lies in the obvious artifacts that do not come up for natural pristine images, which is due to the non-smooth warping back operation. With improved synthesis method or compression to remove such artifacts, this difference no longer exists, making the microscopic texture manipulation the sole clue for detection. 

\begin{definition}[Latent Generative Model for $\mbox{P}^{\mathrm{H}}$] 
\label{def:pw}
The generative model for $\mbox{P}^{\mathrm{H}}$ is the same to Def.~\ref{def:pl} except the missing arrow in $Y \to X$ and $p_{f^{\mathrm{H}}_x}(x|z) = p_{f_x}(x|z,y=0)$.
\end{definition}

According to Def.~\ref{def:pl},~\ref{def:pw}, the $p(z,y)$ hence the mechanism of $p(y|z)$, as the predicting mechanism with only texture inconsistency, is shared among all generative models, therefore is defined as the \emph{invariance with texture violation}:
\begin{definition}[Invariance with Texture Violation] 
\label{def:invariance}
We define the predicting mechanism $p(y|z)$ as the \textbf{invariance with texture violation}, which is shared between $\mbox{P}^{\mathrm{L}}$ in Def.~\ref{def:pl} and $\mbox{P}^{\mathrm{H}}$ in Def.~\ref{def:pw}.
\end{definition}

By accessing only data from $\mbox{P}^{\mathrm{L}}$ which mixes the texture violation with the artifact-effect, our goal is to identify the \emph{invariance with texture violation} that is also shared by $\mbox{P}^{\mathrm{L}}$, by separating out the artifact-effect. Such a learned invariance is used for final prediction on $\mbox{P}^{\mathrm{H}}$. To learn the invariance in Def.~\ref{def:invariance}, note that the generating mechanism of image $X$ in $\mbox{P}^{\mathrm{H}}$, although is different from, however is similar to the one in $\mbox{P}^{\mathrm{L}}$: $X \gets f_x(Z,Y,\varepsilon_X)$ for $\mbox{P}^{\mathrm{L}}$ v.s $X \gets f_x(Z,Y=0,\varepsilon_X)$ for $\mbox{P}^{\mathrm{H}}$, providing us the opportunity to identify $f_x$, with which we can infer the latent components $Z$ from $X$ and finally predict using $p(y|z)$. In the subsequent section, we provide our InTeLe framework which learns the $p(y|z)$ from $\mbox{P}^{\mathrm{L}}$ and generalize to the data from $\mbox{P}^{\mathrm{H}}$.

\begin{figure*}[h!]
\centering
    \includegraphics[width=0.95\textwidth]{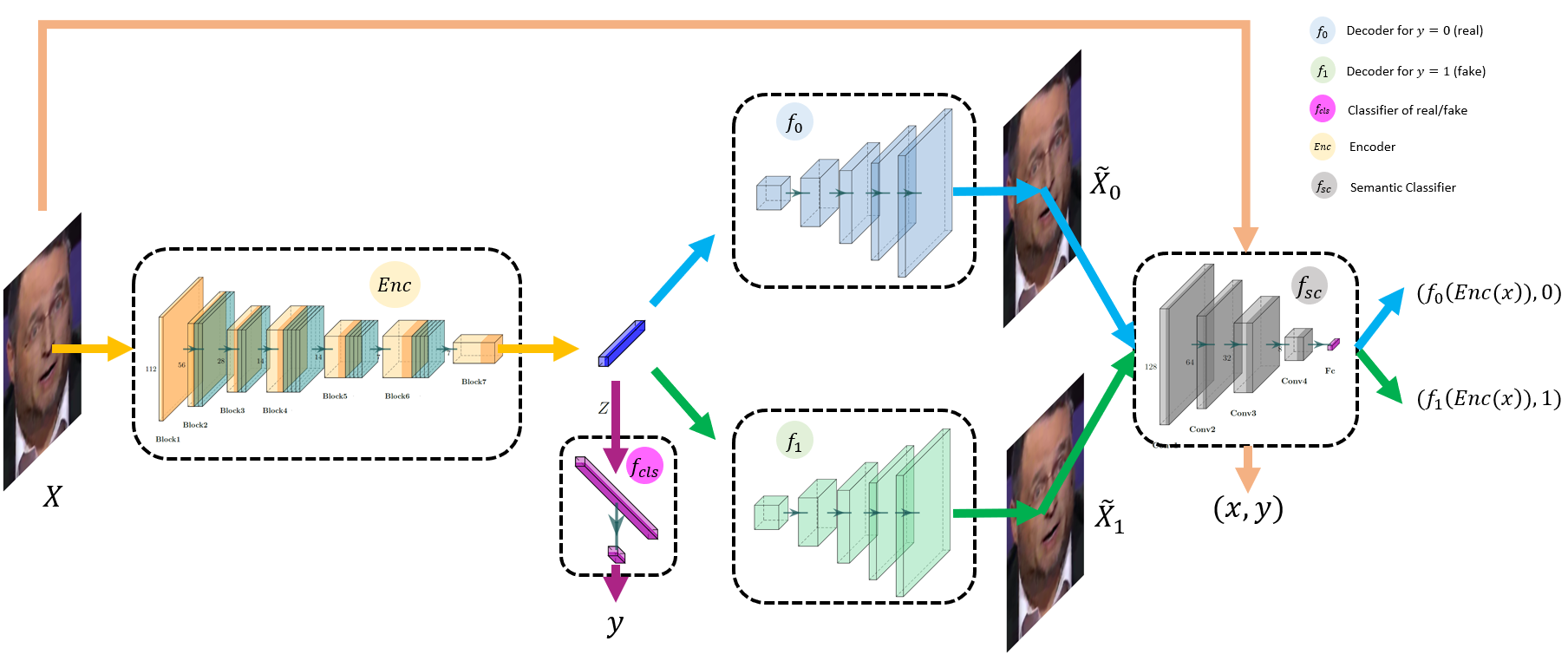}
\caption{Our InTeLe framework that learns from $\mbox{P}^{\mathrm{L}}$ and aims at learning the \emph{invariance with texture violation}, \textit{i.e.}, $p(y|z)$ to generalize on $\mbox{P}^{\mathrm{H}}$. The InTeLe is composed of three modules: (i) auto-encoder with encoder $\mathrm{Enc}$ to extract the texture information and two-branch decoder (\textit{i.e.}, $f_0, f_1$) corresponding to generating process of pristine/fake images; (ii) the invariant classifier $f_{\mathrm{cls}}$ which exploits the information of the texture inconsistency; (iii) the semantic classification $f_{\mathrm{SC}}$ to encode the artifact-effect, by classifying $f_0(\mathrm{Enc}(x))$ as 0 and $f_1(\mathrm{Enc}(x))$ as 1; besides, it trains on $(x,y)$ to regularize $f_0,f_1$ not to learn trivial solutions. }
\label{fig:framework}
\end{figure*}

\subsection{Invariant Texture Learning Framework}
\label{sec:intele}

We introduce our learning method, namely \textbf{In}variant \textbf{Te}xture \textbf{Le}arning, with the goal of identifying the invariance with texture violation (\textit{i.e.}, $p(z|y)$) from data in $\mbox{P}^{\mathrm{L}}$ and generalize on data from $\mbox{P}^{\mathrm{H}}$, by separating it from the artifact-effect. The whole pipeline is illustrated in Fig.~\ref{fig:framework}. As shown, our InTeLe is based on the Auto-Encoder (AE), as a deterministic version ($\varepsilon_x = 0$, which can be proved to satisfy the identifiability in section~\ref{sec:iden}) of Variational AE to learn the generative model. To model the artifacts and texture violation in $\mbox{P}^{\mathrm{L}}$, our \textbf{(i)} encoder is expected to extract the latent components $Z$ that incorporates the texture information of local facial regions; and our \textbf{(ii)} two-branch decoders respectively correspond to the generating process of fake and pristine images, which echos the dependency of $Y$ during generation of $X$ in Fig.~\ref{fig:dag}. To further enforce the decoder to capture the obvious artifact-effect, we append a shallow network (\textit{i.e.}, $f_{\mathrm{SC}}$) to capture the low-frequency information (which is easy to be observe by humans). By separating out this artifact, the extracted latent features $Z$ only encode the texture consistency for predicting pristine/fake. The modules of auto-encoder ($\mathrm{Enc},f_0,f_1$), classifier $f_{\mathrm{cls}}$ and the semantic classifier $f_{\mathrm{SC}}$ participant in the end-to-end training. We explain each module in more details.

\noindent  \textbf{Auto-Encoder Module.} As early mentioned, the information of texture inconsistency of each local region is embedded into the latent components $Z$ of the whole image $X$, and the artifacts are the ones that left during the synthesizing of fake images. Motivated by these \textit{priories}, we leverage the auto-encoder framework to model such two effects. The latent components extracted by the encoder is expected to capture the invariant texture inconsistency. For the decoder, to model the artifacts due to synthesis, we propose a two-branch version with one (denoted as $f_0$) for generating the pristine images and the other (denoted as $f_1$) for generating the fake images. Since texture information corresponds to high-frequency signals that can be captured by deep neural networks, we choose the network structure for encoder with deep layers (\textit{e.g.}, XceptionNet, EfficientNet), and the details will be introduced in the experimental part. The overall loss function for encoder $\mathrm{Enc}$ and the decoder $f_0,f_1$ is:
\begin{align}
    \label{eq:gen}
       \mathcal{L}_{\mathrm{AE}}(\mathrm{Enc},f_0,f_1) & = \sum_{i=1}^n \mathbbm{1}(y_i =0) * \Vert X_i - f_0(\mathrm{Enc}(x_i)) \Vert_F^2 \nonumber \\
    + &  \mathbbm{1}(y_i =1) * \Vert X_i - f_1(\mathrm{Enc}(x_i)) \Vert_F^2 
\end{align}

\noindent \textbf{Semantic Classifier Module.} To further ensure that the artifacts can be captured by the difference between $f_0$ and $f_1$, we append a shallow semantic classifier ($f_{\mathrm{SC}}$ in Fig.~\ref{fig:framework}) which has been shown to only capture the obvious low-frequency signals (\textit{e.g.}, the semantic pattern in object classification). We expect that for any image $x$, this $f_{\mathrm{SC}}$ can only exploit artifacts as a difference between $f_0(\mathrm{Enc}(x))$ and $f_1(\mathrm{Enc}(x))$ for prediction. Specifically, for each image either is pristine or fake, the $f_{\mathrm{SC}}$ is trained to classify the estimated images from $f_0$ (which expose artifacts) as 0 and the one from $f_1$ (no artifacts exposed) as 1:
\begin{align}
\label{eq:lsc-ce}
     \mathcal{L}_{\mathrm{CE}}(\mathrm{Enc},f_0,f_1,f_{\mathrm{SC}}) & =  -\sum_{i=1}^n \log{ \left(1 - f_{\mathrm{SC}}(f_0(\mathrm{Enc}(x_i))) \right) } \nonumber \\
     &  + \log{\left(f_{\mathrm{SC}}(f_1(\mathrm{Enc}(x_i)))\right)}.
\end{align}
Further, to avoid learning the trivial solution for $f_0$ and $f_1$ (such as making the first pixel of the reconstructed image different), we add an auxiliary branch of pristine/fake classification with image $x$ directly as input to avoid learning $f_0,f_1$ that are trivial for separation by $f_{\mathrm{SC}}$:
\begin{align}
    \label{eq:lsc-aux}
    \mathcal{L}_{\mathrm{aux}}(f_{\mathrm{SC}}) & = -\sum_{i=1}^n \mathbbm{1}(y_i = 0) * \log{ \left(1 - f_{\mathrm{SC}}(x_i) \right) } \nonumber \\
     & + \mathbbm{1}(y_i = 1) * \log{\left(f_{\mathrm{SC}}(x_i)\right)}.
\end{align}
Combining with the Eq.~\eqref{eq:lsc-ce}, the loss function is:
\begin{align}
    \mathcal{L}_{\mathrm{SC}}(f_0,f_1,f_{\mathrm{SC}}) = \mathcal{L}_{\mathrm{CE}}(f_0,f_1,f_{\mathrm{SC}}) + \alpha * \mathcal{L}_{\mathrm{aux}}(f_{\mathrm{SC}}), \nonumber 
\end{align}
with $\alpha \geq 0$ denoting the hyper-parameter to represent the extent of regularization.

\noindent  \textbf{Classifier Module.} With the effect of artifacts separated by the semantic classifier $f_{\mathrm{SC}}$ following after $f_0, f_1$, the $Z$ only contains the texture information, which is therefore taken as input for the final classifier that is trained by:
\begin{align}
    \label{eq:cls}
   \mathcal{L}_{\mathrm{cls}}(f_{\mathrm{cls}}, \mathrm{Enc}) & = -\sum_{i=1}^{n} \mathbbm{1}(y_i = 0) * \log{ \left(1 - f_{\mathrm{cls}}(\mathrm{Enc}(x_i)) \right) } \nonumber  \\
     & + \mathbbm{1}(y_i = 1) * \log{\left(f_{\mathrm{cls}}(\mathrm{Enc}(x_i))\right)}.
\end{align}

\noindent \textbf{Training and Testing.} In summary, the overall loss functions is the combination of the losses in the modules mentioned above: the reconstructed loss (\textit{i.e.}, Eq.~\eqref{eq:gen}) followed by the shallow semantic classifier to separate the effect of artifacts from the texture information, which is for final classification. The overall loss $\mathcal{L}$ for training is: 
\begin{align}
    \label{eq:overall-loss}
    & \mathcal{L}(f_0,f_1,\mathrm{Enc}, f_{\mathrm{SC}}, f_{\mathrm{cls}}) = \mathcal{L}_{\mathrm{AE}}(\mathrm{Enc},f_0,f_1) \nonumber \\
    & + \lambda_1 *  \mathcal{L}_{\mathrm{SC}}(f_0,f_1,f_{\mathrm{SC}}) + \lambda_2 *  \mathcal{L}_{\mathrm{cls}}(f_{\mathrm{cls}}, \mathrm{Enc}), 
\end{align}
which trains all modules in an end-to-end scheme. The $\lambda_1,\lambda_2$ are hyper-parameters that balance the effects of each module. During testing, given a new sample $x_{\mathrm{new}}$, we feed the extracted texture information $Z$ (\textit{i.e.}, $\mathrm{Enc}(x_{\mathrm{new}})$) into the classifier $f_{\mathrm{cls}}$ for pristine/fake prediction. 


\subsection{Identifiability of the Texture Violation}
\label{sec:iden}

In this section, we provide a theoretical guarantee for our InTeLe to ensure that the \emph{invariance with texture violation} (\textit{i.e.}, $p(y|z)$) can be identified, \textit{i.e.} precisely inference from the observational distribution (here refers to $p^{\mathrm{L}}(x,y)$). Our analysis is inspired by the recent analysis in nonlinear ICA \cite{khemakhem2019variational,khemakhem2020ice} with adaptation on our generative model in Fig.~\ref{fig:dag} with additional dependency of $Y$ during generation. Besides, we here generalize the definition of $Y$ to $\mathcal{Y} := \{0,...,m\}$ which denotes the space of sub-types of images: the $0$ denotes the pristine image and the $k \leq m$ denotes the fake image implemented with the $k$-th manipulation method. Our goal is to identify the generating mechanism $f_x$ for inference of latent variable $Z$ (as the intrinsic texture information for training) and hence $p(y|z)$ for prediction. In the following, we assume that the latent variables $Z|Y$ is generated from exponential family, \textit{i.e.}, 
\begin{align}
p^{\mathrm{L}}_{f_z}(z|y) = \prod_{i=1}^{d_z} \exp\Big( \sum_{j=1}^{k_z} T_{i,j}(z_i) \Gamma_{y,i,j} + B_i(z_i) -  A_{y,i} \Big), \nonumber 
\end{align}
where $f_z$ is associated with the sufficient statistics $\{T_{i,j}(z_i)\}$; the $\{\Gamma_{y,i,j}\}$ denotes the natural parameters; and $\{B_i\}, \{A_{y,i} \}$ denote the base measures and normalizing constants to ensure the integral of distribution equals to 1. Let $\mathbf{T} \!:=\! \left[\mathbf{T}_{1},...,\mathbf{T}_{d_z}  \right]$ $\!\in\! \mathbb{R}^{k_z \times d_z}$ $\big(\mathbf{T}_{i} \!:=\! [T_{i,1},...,T_{i,k_z}], \forall i \in [d_z]\big)$ and $\bm{\Gamma}_y \!:=\! \left[\bm{\Gamma}_{y,1},...,\bm{\Gamma}_{y,d_z} \right]$ $\!\in\! \mathbb{R}^{k_z \times d_z}$ $\big(\bm{\Gamma}_{y,i} \!:=\! [\Gamma_{y,i,1},...,\Gamma_{y,i,k_z}], \forall i \in [d_z]\big)$. Our goal is to identify the parameters $\theta:\{\mathbf{T},f_x\}$ that give rise to the observational distribution $p^{\mathrm{L}}(x,y)$. Since $Z$ is assumed to model the high-level abstractions/concepts in the latent generative model \cite{kingma2014auto}, we assume that $d_z < d_x$ and that $\varepsilon_x \in \mathbb{R}^{d_x - d_z}$ such that $f_x$ is bijective. The following theorem ensures that the $\theta$ is identifiable under a linear and pointwise transformation, which is similar to the result in \cite{khemakhem2019variational}. 

\begin{theorem}[Identifiability]
\label{thm:iden}
Consider the DAG associated with generative model which has the following factorization, 
\begin{equation}
\label{eq:dag}
    p^{\mathrm{L}}(x,z,y) = p_{f_y}(y)p_{f_z}(z|y)p_{f_x}(x|z,y).
\end{equation}
under the following assumptions:
\begin{enumerate}[(a),topsep=0pt,noitemsep]
    \item The sufficient statistics $\{T_{i,j}\}$ are differentiable and with nonzero derivatives almost everywhere.
    \item There exists $m:=k_zd_z+1$ values of $y$, \textit{i.e.}, $y_0,y_1,..,y_{m}$ such that the matrix $[ \bm{\Gamma}_{y_1} - \bm{\Gamma}_{y_0}$ $,...,\bm{\Gamma}_{y_{m}} - \bm{\Gamma}_{y_0} ]$ is invertible
\end{enumerate}
we have that if $\theta$ and $\tilde{\theta} := \{\tilde{\mathbf{T}},\tilde{f}_x\}$ give rise to the same observational distribution, \textit{i.e.}, $p_\theta^{\mathrm{L}}(x,y) = p_{\tilde{\theta}}^{\mathrm{L}}(x,y)$ for any $(x,y) \in \mathcal{X} \times \mathcal{Y}$ distribution, then there exists an invertible matrix $A \in \mathbb{R}^{k_zd_z \times k_z d_z}$ and vector $b \in \mathbb{R}^{k_z d_z}$ such that $\bm{T}([f_x^{-1}]_{\mathcal{Z}}) = A \tilde{\bm{T}}([\tilde{f}_x^{-1}]_{\mathcal{Z}}) + b$ \footnote{Here the $[f]_{\mathcal{A}}$ denotes the function $f$ restricted on the dimension index that belongs to index set $\mathcal{A}$}.
\end{theorem}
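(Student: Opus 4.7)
The plan is to adapt the identifiability argument for nonlinear ICA with auxiliary variables (Khemakhem et al.) to the present model, whose distinguishing feature is the extra $Y\!\to\!X$ arrow. First, from $p^{\mathrm{L}}_\theta(x,y)=p^{\mathrm{L}}_{\tilde\theta}(x,y)$ and the shared marginal $p(y)$, I pass to equality of conditionals $p^{\mathrm{L}}_\theta(x|y)=p^{\mathrm{L}}_{\tilde\theta}(x|y)$ for every admissible $(x,y)$. Bijectivity of $f_x(\cdot,y,\cdot)$ lets me apply change of variables, writing each conditional as
\begin{equation*}
p_{f_z}(z\mid y)\, p_{\varepsilon_x}(\varepsilon_x)\, |\det J_{f_x^{-1}}(x,y)|,
\end{equation*}
with $(z,\varepsilon_x)=f_x^{-1}(x,y)$, and analogously for $\tilde\theta$. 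Taking logs and substituting the exponential-family density for $p_{f_z}(z|y)$ yields a master identity, holding pointwise in $(x,y)$, that couples $\mathbf{T}$, $\bm{\Gamma}_y$, the base measures $B_i$, the noise log-density, and the Jacobian log-determinant to their tilded counterparts.

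Next I would invoke assumption (b), picking the $m+1=k_zd_z+1$ values $y_0,\ldots,y_m$ whose natural-parameter differences span, and subtract the master identity at $y_0$ from those at each $y_k$. Because the master identity is a pointwise equality in $(x,y)$, the non-exponential-family pieces on the $\theta$-side (Jacobian log-determinant, noise log-density) equal those on the $\tilde\theta$-side at every $(x,y)$, so after subtraction the residuals cancel across the two sides, leaving only the difference of $\bm{\Gamma}_{y_k}-\bm{\Gamma}_{y_0}$-weighted sufficient-statistic contributions of $\theta$ versus $\tilde\theta$. Stacking the $m$ resulting equations into matrix form and inverting the $k_zd_z\times k_zd_z$ matrix $[\bm{\Gamma}_{y_1}-\bm{\Gamma}_{y_0},\ldots,\bm{\Gamma}_{y_m}-\bm{\Gamma}_{y_0}]$ then produces $\mathbf{T}([f_x^{-1}]_{\mathcal{Z}}(x,y))$ as an affine map of $\tilde{\mathbf{T}}([\tilde f_x^{-1}]_{\mathcal{Z}}(x,y))$, which supplies the matrix $A$ and vector $b$. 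Invertibility of $A$ then follows by swapping the roles of $\theta$ and $\tilde\theta$ in the argument, using the nonzero-derivative condition of assumption (a) to rule out degenerate sufficient statistics.

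The hardest part will be the $Y$-dependence of $f_x$, which is absent in the original Khemakhem setting: the inverse image $z^{(k)}=[f_x^{-1}(x,y_k)]_{\mathcal{Z}}$ shifts with $k$, so $\mathbf{T}$ is evaluated at $m$ different arguments within the stacked system rather than at a single point, and the Jacobian and noise contributions do not cancel purely within the $\theta$-side. The resolution, consistent with the theorem's formulation, is to regard $\mathbf{T}\circ[f_x^{-1}]_{\mathcal{Z}}$ as one function of $(x,y)$ and to read off the affine relation pointwise in $(x,y)$ rather than pointwise in $z$; verifying carefully that the non-exponential-family residuals really do cancel across the two sides at each $(x,y)$, rather than only in aggregate, is the step that I expect to require the most bookkeeping.
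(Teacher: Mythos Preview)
Your route diverges from the paper's in one structural choice. The paper treats the observation model as additive noise, writes $p_\theta(x\mid y)$ as a convolution $(p_{T,\Gamma,f,y}*p_{\varepsilon_x})(x)$, and applies the Fourier transform to strip off $p_{\varepsilon_x}$ before taking logarithms; only then does it subtract across values of $y$ and invoke assumption~(b). You instead use the stated bijectivity of $f_x(\cdot,y,\cdot)$ to change variables directly, carrying $\log p_{\varepsilon_x}$ into the master identity as one more residual term. Your route matches the paper's explicit hypothesis that $\varepsilon_x\in\mathbb{R}^{d_x-d_z}$ with $f_x$ bijective, whereas the paper's convolution step tacitly uses full-dimensional additive noise; in exchange, the deconvolution kills the noise term outright while you must track it through the subtraction.

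There is, however, a genuine gap in your second paragraph. You assert that ``the non-exponential-family pieces on the $\theta$-side equal those on the $\tilde\theta$-side at every $(x,y)$,'' but the master identity only gives equality of the \emph{sums}; nothing forces the $\theta$-Jacobian to match the $\tilde\theta$-Jacobian pointwise so long as the exponential-family parts compensate. More seriously, the standard subtraction trick that produces $\langle \mathbf T(z),\bm{\Gamma}_{y_k}-\bm{\Gamma}_{y_0}\rangle$ on each side relies on the Jacobian, base-measure, and noise terms being \emph{$y$-independent}, so that they cancel \emph{within} each side upon differencing. Once $f_x$ depends on $y$, the quantities $z^{(k)}=[f_x^{-1}(x,y_k)]_{\mathcal Z}$, $\log|J_{f_x^{-1}}(x,y_k)|$, and $\log p_{\varepsilon_x}([f_x^{-1}(x,y_k)]_{\varepsilon})$ all vary with $k$, so neither that within-side cancellation nor the factoring into $\bm{\Gamma}_{y_k}-\bm{\Gamma}_{y_0}$ is available: the unknown vector $\mathbf T(z^{(k)})$ itself changes across the stacked equations, and you no longer have a linear system with coefficient matrix $[\bm\Gamma_{y_1}-\bm\Gamma_{y_0},\ldots,\bm\Gamma_{y_m}-\bm\Gamma_{y_0}]$. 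Your proposed resolution---reading $\mathbf T\circ[f_x^{-1}]_{\mathcal Z}$ as a single function of $(x,y)$---does not repair this, because the inversion in assumption~(b) needs the \emph{same} unknown vector in every row. The paper's own proof, for what it is worth, writes $f^{-1}(x)$ without a $y$-argument after deconvolution and thereby sidesteps the issue notationally rather than substantively; so the difficulty you correctly flag in your third paragraph is real and is not merely bookkeeping.
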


\begin{proof}

Suppose $\theta, \tilde{\theta}$ give the same observational distribution that $p_{\theta}(x,y) = p_{\tilde{\theta}}(x,y)$, then we have:
\begin{align}
    \int p_{f_y}(y)p_{f_z}(z|y)p_{f_x}(x|z,y)dz = \int p_{\tilde{f}_y}(y)p_{\tilde{f}_z}(z|y)p_{\tilde{f}_x}(x|z,y)dz,
\end{align}
since $p_{f_y}(y) = p_{\tilde{f}_y}(y)$, then we have that 
\begin{align}
    \int p_{T^z,\Gamma}(z|y)p_{f_x}(x|z,y)dz = \int p_{\tilde{T}^z,\tilde{\Gamma}}(z|y)p_{\tilde{f}_x}(x|z,y)dz. 
\end{align}
According to the rule of change of variables that $\bar{x}:=f_x(z,y)$, we have 
\begin{align}
& \int p_{T^z,\Gamma}([f^{-1}(\bar{x})]_{\mathcal{Z}}|y)p_{\varepsilon_x}(x-\bar{x}) |J_{f^{-1}}(\bar{x})|d\bar{x} \nonumber \\
= & \int p_{\tilde{T}^z,\tilde{\Gamma}}([\tilde{f}^{-1}(\bar{x})]_{\mathcal{Z}}|y)p_{\varepsilon_x}(x-\bar{x}) |J_{\tilde{f}^{-1}}(\bar{x})|d\bar{x}. 
\end{align}
This implies that $(p_{T,\Gamma,f,y} * p_{\varepsilon_x})(x) = (p_{\tilde{T},\tilde{\Gamma},\tilde{f},y} * p_{\varepsilon_x})(x)$. Taking the Fourier transformation on both sides, we have that $F(p_{T,\Gamma,f,y})(\omega) \phi_{\varepsilon_x}(\omega) = F(p_{\tilde{T},\tilde{\Gamma},\tilde{f},y})(\omega) \phi_{\varepsilon_x}(\omega)$. This implies that $F(p_{T,\Gamma,f,y})(\omega) \phi_{\varepsilon_x}(\omega) = F(p_{\tilde{T},\tilde{\Gamma},\tilde{f},y})(\omega)$, from which we have $p_{T,\Gamma,f,y} = p_{\tilde{T},\tilde{\Gamma},\tilde{f},y}$ by taking the inverse Fourier transformation on both sides. Here, the $p_{T,\Gamma,f,y}$ is 
\begin{align}
    p_{T,\Gamma,f,y}(x) = p_{T,\Gamma}(f^{-1}(x)|y)|J_{f^{-1}}(x)|.
\end{align}
By taking the logarithmic on both sizes of $p_{T,\Gamma,f,y} = p_{\tilde{T},\tilde{\Gamma},\tilde{f},y}$, we have that 
\begin{align}
    \langle T, \Gamma(y) \rangle = \langle \tilde{T}, \tilde{\Gamma}(y) \rangle + b(y) + g(x),
\end{align}
where $b(y)$ is related to $y$ and $g(x)$ is the function of $x$. Define $\bar{\Gamma}(y) := \Gamma(y) - \Gamma(y_1)$, we have that 
\begin{align}
    L^\top T = \tilde{L}^{\top} \tilde{T}  + b. 
\end{align}
According to the condition $b)$ and applying the result in \cite{khemakhem2019variational}, we can have that $A:=(L^{\top})^{-1}\tilde{L}$ is invertible. The proof is completed.
\end{proof}

\begin{remark}
The condition (b) puts a requirement on the number and the diversity of manipulating classes to identify the invariant mechanisms $\theta$. Notwithstanding that this condition may not be satisfied in real scenarios (since the accessible training data may only contain a limited number of manipulation types), we empirically find that incorporating $y$ (into decoder) can help identify the texture information, even if with $Y$ being as binary pristine/fake label. 
\end{remark}
This theorem states that under a deterministic setting, the latent variable can be identified up to linear transformation (with an invertible matrix) and a point-wise transformation. Besides, we believe that latent components (the texture and various abstractions regarding each facial region) are deterministic to the facial image. Therefore, we propose to learn such invariant texture information using an auto-encoder rather than a variational one.





\section{Experiments}

\begin{table*}[h!]
\centering
\caption{The statistics of pristine/fake videos/images and the performance of XceptionNet \cite{rossler2019faceforensics++} trained on C40 dataset of Faceforensics++. Compared to the Faceforensics++, the Xception suffers from performance degradation on both Celeb-DF with better visual quality and benchmark images with compression to smooth out the artifacts. }
\setlength{\tabcolsep}{2pt}
\small
\begin{tabular}{c|c|c|c|c|c|c|c|c|c|c|c}
\toprule
\diagbox{Attributes}{Dataset} & \# Pristine Videos & \# Fake Videos & \# Pristine images & \# Fake images & AUC \cite{rossler2019faceforensics++} & Precision \cite{rossler2019faceforensics++} \\
\hline
Faceforensics++ \cite{rossler2019faceforensics++}  & 1,000 & 1,000 & 509.9k & 1,830.1k & 0.955 & - \\
\hline
Celeb-DF \cite{li2020celeb} & 590 & 5,639 & 225.4k &  2,116.8k & 0.655 &  0.810 \\   
\hline 
Benchmark in \cite{rossler2019faceforensics++} & 1,000 & - & 500 & 500 & - & 0.701 \\
\bottomrule
\end{tabular}
\label{tab:stat}
\end{table*}

In this section, we use Faceforensics++ \cite{rossler2019faceforensics++} (which expose obvious artifacts for classification \cite{tolosana2020deepfakes, afchar2018mesonet, li2020celeb}) for training, and test our model on two dataset of  ($\mbox{P}^{\mathrm{H}}$) \textit{i)} the test set of Celeb-DF \cite{li2020celeb}, and \textit{ii)} additional 1000 benchmark images in \cite{rossler2019faceforensics++}. We will introduce these three datasets in details, with statistics for each dataset summarized in Tab.~\ref{tab:stat}.

\subsection{Dataset}
\label{sec:data}

\noindent \textbf{Faceforensics++ \cite{rossler2019faceforensics++} for $\mbox{P}^{\mathrm{L}}$.} The Faceforensics++ transferred the face (or expression) from the target video to each frame of the source video to generate a fake image. Different levels of image resolution are considered: the Raw (original) with the highest resolution, the light compression C23 (constant rate quantization parameter equal to 23), and C40. To synthesize fake images, the Faceforensics++ first extracts only facial region from the source image for generating the target facial region, followed by final warping back steps such as affine transformation and shape refinement to match the non-facial regions in the source image. Such a warping procedure can exhibit visual artifacts, which can be easily captured to detect the fake images \cite{tolosana2020deepfakes, afchar2018mesonet,li2018exposing}.


\noindent \textbf{Celeb-DF \cite{li2020celeb}.} Compared to the Faceforensics++, the Celeb-DF significantly improve the image quality, which matches with the characteristics in $\mbox{P}^{\mathrm{H}}$ and provides a more challenging arena for detection methods. Specifically, the \cite{li2020celeb} adopted more wide and deep network structures of Auto-encoder to improve the resolution. Besides, it generated a facial mask that along with the facial surrounding context, can make the boundary more smooth than the convex hull adopted in Faceforensics++. It additionally corrected the color mismatch. We can therefore approximately assume that the generation of the fake image is the same as that of the natural pristine image. Although the artifacts were removed, the fake images exhibit inconsistent textures between facial and nonfacial regions for detection.


\noindent \textbf{Benchmark images in \cite{rossler2019faceforensics++}.} In addition to Faceforensics++, the \cite{rossler2019faceforensics++} provided a competitive benchmark on 1,000 videos post-processed by unknown compression approaches to mimic the ones in real media scenarios with low resolution. This benchmark data only contains 1,000 images, each of which was manually selected as the most challenging one among all frames in the corresponding video. The dataset can be downloaded from the host-server of the \cite{rossler2019faceforensics++} with unknown ground-truth labels. As shown in Tab.~\ref{tab:stat}, the Xception trained on C40 suffers from precision degradation on benchmark images. 


\subsection{Implementations}

We respectively adopt the Area Under the ROC Curve (AUC) and prediction accuracy as the evaluation metric for Celeb-DF and the benchmark images.

\noindent \textbf{Compared Baselines.} For Celeb-DF, we compare with \newline
\textbf{a)} The Convolutional Neural Network (CNN) supervised by cross entropy (CE). For future reference, we name it as CE ($X \to Y$) regardless of the choice of backbone. \newline
\textbf{b)} The spoof cues framework (LGSC) \cite{feng2020learning} also employed the auto-encoder framework. The LGSC was formulated as an anomaly detection method, which regularized the decoder to generate spoof cues for manipulated images. Specifically, it is composed of a spoof cue generator and an auxiliary classifier, with the spoof cue generator parameterized by U-Net to capture the multi-scale information. To expose spoof cues for manipulated images, it implemented a sparse regression loss on the distance between the residual feature map of pristine samples and the zero feature map; while for fake images, it put no constraints. \newline
\textbf{c)} To validate the effectiveness of semantic classifier, we additionally conduct another version of our method but without $f_{\mathrm{SC}}$ and the other modules are kept the same.

\noindent \textbf{Implementation Details.} For all methods, we first crop the image to only enclose the human face using the OpenFace toolkit \cite{amos2016openface}. This pre-processing has been shown \cite{rossler2019faceforensics++} to improve the detecting performance. We only keep the fake images with replacement in the training set since the Celeb-DF only contains face replacement. We implement U-Net \cite{ronneberger2015u} for the decoder, which can capture multi-scale especially fine-scale information in image reconstruction or object segmentation \cite{baheti2020eff}. We implement various backbones for the encoder (also the layers before CE), including EfficientNet-B5 \cite{tan2019efficientnet} with student-noise initialization \cite{xie2020self} and XceptionNet \cite{chollet2017xception} to validate that the effectiveness of our method can generalize to any backbone as long as it is deep enough the capture the high-frequency signals. We implement the stochastic gradient descent (SGD) as the optimizer, with the learning rate set as 0.02. The weight-decay coefficient is set to 1e-4. The batch-size is respectively set to 16 and 20 for EfficientNet-B5 and XceptionNet. The $\lambda_1,\lambda_2$ in Eq.~\eqref{eq:overall-loss} are set to 2 and 0.25, respectively. The $\alpha$ for $\mathcal{L}_{f_{\mathrm{SC}}}$ in Eq.~\eqref{eq:lsc-aux} is set to 4.

\begin{figure}[h!]
    \centering
    \includegraphics[width=0.495\textwidth]{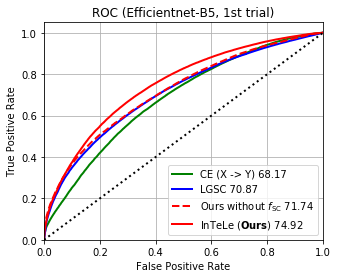}
    \includegraphics[width=0.495\textwidth]{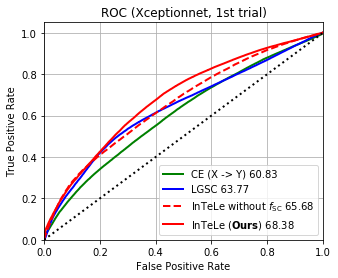}
     \includegraphics[width=0.495\textwidth]{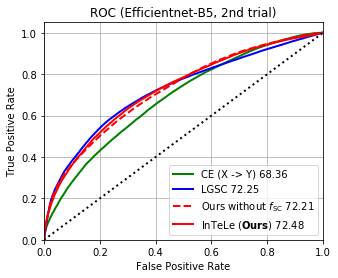}
    \includegraphics[width=0.495\textwidth]{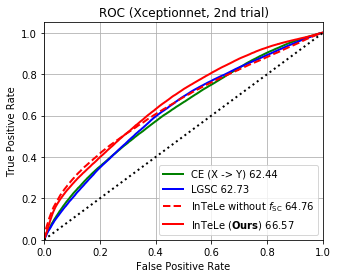}
     \includegraphics[width=0.495\textwidth]{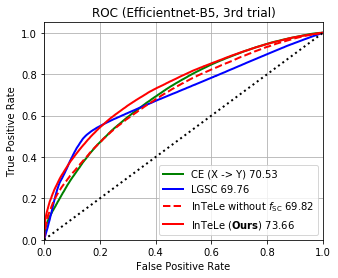}
    \includegraphics[width=0.495\textwidth]{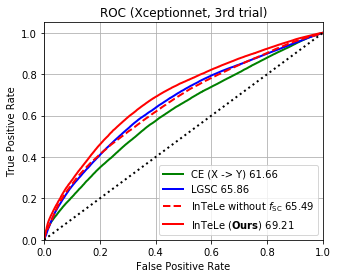}
    \caption{ROC curves of our methods and compared baselines. The left image is with EfficientNet-B5 as backbone; the right image is with the XceptionNet as backbone.}
    \label{fig:roc}
\end{figure}

\subsection{Results and Analysis}

\begin{figure*}[h!]
    \centering
    \includegraphics[width=0.918\textwidth]{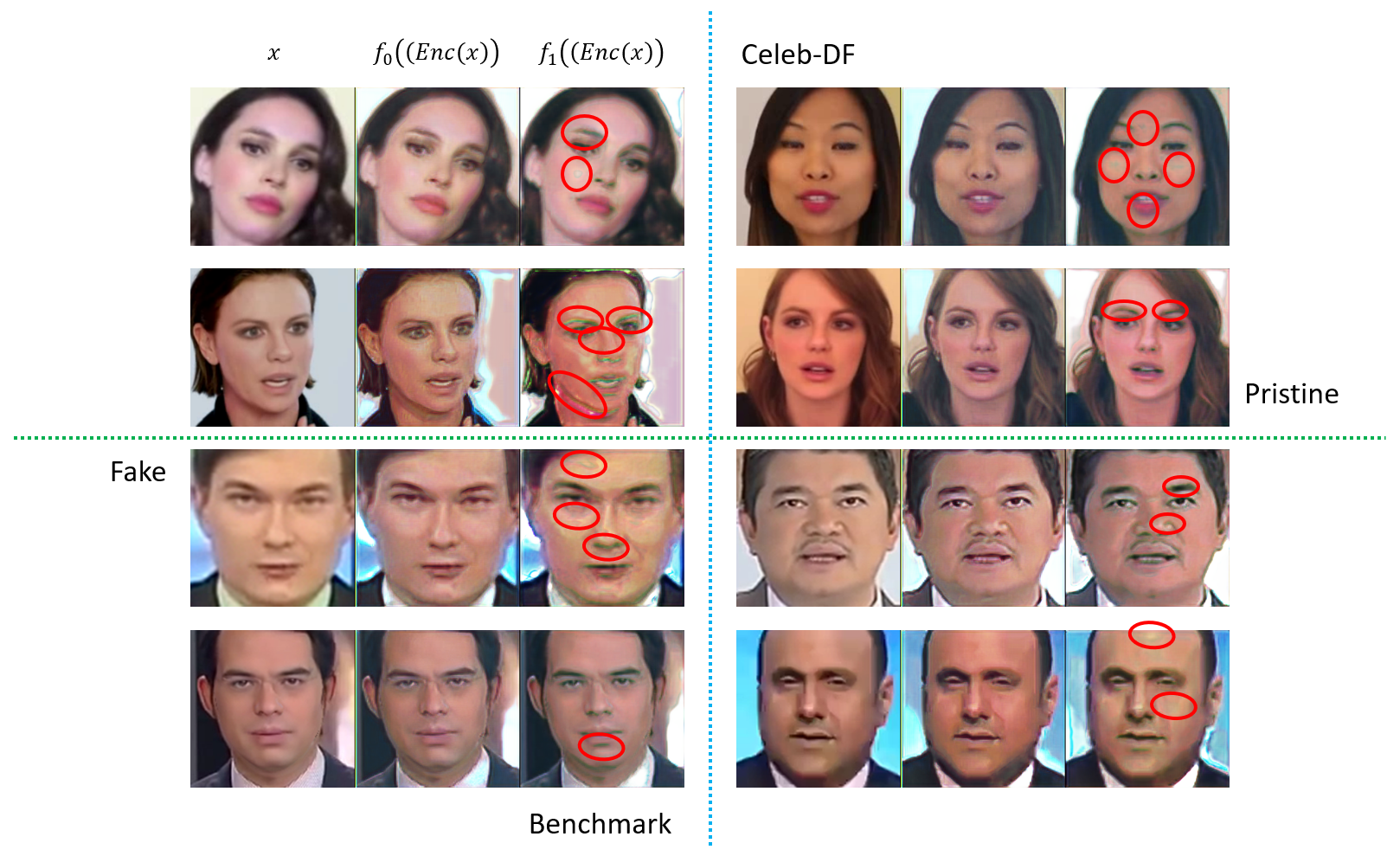}
    \caption{Separation of artifacts by the two-branch decoder in our InTeLe. The top left, top right, bottom left and the bottom right quadrants correspond to pristine images in Celeb-DF, fake images in Celeb-DF, pristine images in benchmark and the fake images in benchmark. From the left to right in each row corresponds to original image $x$, reconstructed image by $f_0$ (\textit{i.e.}, $f_0(\mathrm{Enc}(x)))$) and reconstructed image by $f_1$ (\textit{i.e.}, $f_1(\mathrm{Enc}(x))$). As marked by the red circle, the $f_1(\mathrm{Enc}(x)))$ can expose obvious artifacts whether $x$ is pristine/fake.}
    \label{fig:artifact}
\end{figure*}

We first apply our method on the test set of Celeb-DF. The average over 3 runs and standard variance for each method are summarized in Tab.~\ref{tab:auc}. The ROC curve for these experiments is also plotted in Fig.~\ref{fig:roc}. As shown, our method performs better or comparable than other methods under any backbones. Specifically, due to the modeling of abnormal patterns with triplet loss, the LGSC can also outperform the CE ($X \to Y$). Equipped with the semantic classifier $f_{\mathrm{SC}}$, our label-dependent decoder can separate out the artifact-effect, making the encoded embedding $Z$ capture cleaner \emph{invariance with texture information} (\textit{i.e.}, in Def.~\ref{def:invariance}). The effectiveness of $f_{\mathrm{SC}}$ is validated by the improvement of our InTeLe over one without $f_{\mathrm{SC}}$.


\begin{table}[h!]
  \centering
  \setlength{\tabcolsep}{1pt}
  \caption{AUC (mean $\pm$ std) on Celeb-DF \cite{li2020celeb} over three runs; and ACC on Benchmark. As shown, our method consistently outperform others. The ROC are shown in Fig.~\ref{fig:roc}.}
  \small
    \begin{tabular}{c|c|c|c}
    \toprule
    \multirow{2}{*}{\diagbox{Method}{Dataset}} & \multicolumn{2}{c|}{Celeb-DF \cite{li2020celeb}} & Benchmark \cite{rossler2019faceforensics++} \\
    \cmidrule{2-4}
    & EfficientNet-B5 & XceptionNet & EfficientNet-B5 \\
    \hline
    CE ($X \to Y$) & $69.8 \pm 1.0$ & $61.6 \pm 0.8$ & $83.0\%$ \\
    \hline
    LGSC \cite{feng2020learning}  & $71.0 \pm 1.8$ & $64.1 \pm 1.3$ & $84.8\%$ \cite{feng2020learning}  \\
    \hhline{=|=|=|=}
    InTeLe without $f_{\mathrm{sc}}$ & $71.3 \pm 1.0$ & $65.3 \pm 0.5$ & $84.4\%$ \\
    \hline
    InTeLe (\textbf{Ours}) & $\bm{73.8 \pm 1.1}$ & $\bm{68.1 \pm 1.4}$ & $\bm{86.4\%}$ \\
    \bottomrule
    \end{tabular}
  \label{tab:auc}
\end{table}

\noindent \textbf{Results on benchmark images in \cite{rossler2019faceforensics++}.} For benchmark data, we adopt a down-sample strategy for fake images in Faceforensics++ to make the ratio of pristine-fake approximately being 1:1 (1:2 in the original dataset). Besides, we augmented the dataset by implementing compression on the original image. Our method can achieve accuracy $\mathbf{86.4\%}$, which outperform LGSC ($84.8\%$) and is the best among the methods with published code in the leaderboard. The comparisons with the CE $X \to Y$ and InTeLe without $f_{\mathrm{SC}}$ show similar phenomena to those of Celeb-DF, as shown in Tab.~\ref{tab:auc}.

\subsection{Effect of Separating Artifacts}

To validate our InTele's effect of separating the artifacts, we visualize the triplet $(x,f_0(\mathrm{Enc}(x)),f_1(\mathrm{Enc}(x)))$ in Fig.~\ref{fig:artifact}. We pick two examples for each case: the fake of Celeb-DF; the pristine of Celeb-DF; the fake of benchmark and the pristine of benchmark, corresponding to the top-left, top-right, bottom-left and bottom-right quadrants. As marked by the red-circle, the $f_1(\mathrm{Enc}(x))$ can exhibit obvious artifacts, such as stark contrast, color mismatch whereas $f_0(\mathrm{Enc}(x))$ do not even the $x$ is fake. This differentiation validates that our $f_0,f_1$ can successfully capture the artifacts effect and \textit{hence} leave the invariant texture violation encoded in the $Z$, which can be contributed to the label-dependent decoder and appended semantic classifier $f_{\mathrm{SC}}$.

\section{Conclusions \& Discussions}

We propose the InTeLe framework which exploits the texture violation due to manipulation for Deepfakes detection. Based on the auto-encoder framework, we propose two-branch decoders appended with a semantic classifier to separate the texture information from the effect of the artifact. As an invariance shared among all fake images, the capture of this texture violation yielded a robust detection method, as validated by much better generalization results than existing artifact-based methods. 

For limitations, our identifiability theorem requires that the types of generating process (\textit{i.e.}, manipulation method) are diverse enough, which may not be satisfied in real scenarios. Besides, our method should be able to generalize to more types of manipulations that beyond the scope of face replacement and expression reenactment in this work. Such a more broadly generalization, together with the identification of the invariant texture information with a limited number of fake textures, is left in the future work.

\newpage
{\small
\bibliographystyle{ieee_fullname}
\bibliography{ref}

\begin{thebibliography}{10}\itemsep=-1pt

\bibitem{afchar2018mesonet}
Darius Afchar, Vincent Nozick, Junichi Yamagishi, and Isao Echizen.
\newblock Mesonet: a compact facial video forgery detection network.
\newblock In {\em 2018 IEEE International Workshop on Information Forensics and
  Security (WIFS)}, pages 1--7. IEEE, 2018.

\bibitem{agarwal2019protecting}
Shruti Agarwal, Hany Farid, Yuming Gu, Mingming He, Koki Nagano, and Hao Li.
\newblock Protecting world leaders against deep fakes.
\newblock In {\em CVPR Workshops}, pages 38--45, 2019.

\bibitem{amos2016openface}
Brandon Amos, Bartosz Ludwiczuk, Mahadev Satyanarayanan, et~al.
\newblock Openface: A general-purpose face recognition library with mobile
  applications.
\newblock 2016.

\bibitem{baheti2020eff}
Bhakti Baheti, Shubham Innani, Suhas Gajre, and Sanjay Talbar.
\newblock Eff-unet: A novel architecture for semantic segmentation in
  unstructured environment.
\newblock In {\em Proceedings of the IEEE/CVF Conference on Computer Vision and
  Pattern Recognition Workshops}, pages 358--359, 2020.

\bibitem{chollet2017xception}
Fran{\c{c}}ois Chollet.
\newblock Xception: Deep learning with depthwise separable convolutions.
\newblock In {\em Proceedings of the IEEE conference on computer vision and
  pattern recognition}, pages 1251--1258, 2017.

\bibitem{feng2020learning}
Haocheng Feng, Zhibin Hong, Haixiao Yue, Yang Chen, Keyao Wang, Junyu Han,
  Jingtuo Liu, and Errui Ding.
\newblock Learning generalized spoof cues for face anti-spoofing.
\newblock {\em arXiv preprint arXiv:2005.03922}, 2020.

\bibitem{galbally2013image}
Javier Galbally, S{\'e}bastien Marcel, and Julian Fierrez.
\newblock Image quality assessment for fake biometric detection: Application to
  iris, fingerprint, and face recognition.
\newblock {\em IEEE transactions on image processing}, 23(2):710--724, 2013.

\bibitem{guera2018deepfake}
David G{\"u}era and Edward~J Delp.
\newblock Deepfake video detection using recurrent neural networks.
\newblock In {\em 2018 15th IEEE International Conference on Advanced Video and
  Signal Based Surveillance (AVSS)}, pages 1--6. IEEE, 2018.

\bibitem{hadid2006face}
Abdenour Hadid, Matti Pietikainen, and T Ahonen.
\newblock Face description with local binary patterns: Application to face
  recognition.
\newblock {\em IEEE transactions on pattern analysis and machine intelligence},
  28(12):2037--2041, 2006.

\bibitem{khemakhem2019variational}
Ilyes Khemakhem, Diederik~P Kingma, and Aapo Hyv{\"a}rinen.
\newblock Variational autoencoders and nonlinear {ICA}: A unifying framework.
\newblock In {\em Proceedings of the 23th International Conference on
  Artificial Intelligence and Statistics (AISTATS-23)}, volume 108, Palermo,
  Italy, 2020. AISTATS Committee, PMLR.

\bibitem{khemakhem2020ice}
Ilyes Khemakhem, Ricardo~Pio Monti, Diederik~P Kingma, and Aapo Hyv{\"a}rinen.
\newblock Ice-beem: Identifiable conditional energy-based deep models.
\newblock {\em arXiv preprint arXiv:2002.11537}, 2020.

\bibitem{kingma2014semi}
Durk~P Kingma, Shakir Mohamed, Danilo~Jimenez Rezende, and Max Welling.
\newblock Semi-supervised learning with deep generative models.
\newblock In {\em Advances in Neural Information Processing Systems}, pages
  3581--3589, 2014.

\bibitem{kingma2014auto}
Diederik~P Kingma and Max Welling.
\newblock Auto-encoding variational {B}ayes.
\newblock In {\em Proceedings of the International Conference on Learning
  Representations (ICLR 2014)}, Banff, Canada, 2014. ICLR Committee.

\bibitem{korshunov2018deepfakes}
Pavel Korshunov and S{\'e}bastien Marcel.
\newblock Deepfakes: a new threat to face recognition? assessment and
  detection.
\newblock {\em arXiv preprint arXiv:1812.08685}, 2018.

\bibitem{korshunov2020deepfake}
Pavel Korshunov and S{\'e}bastien Marcel.
\newblock Deepfake detection: humans vs. machines.
\newblock {\em arXiv preprint arXiv:2009.03155}, 2020.

\bibitem{li2018ictu}
Yuezun Li, Ming-Ching Chang, and Siwei Lyu.
\newblock In ictu oculi: Exposing ai generated fake face videos by detecting
  eye blinking.
\newblock {\em arXiv preprint arXiv:1806.02877}, 2018.

\bibitem{li2018exposing}
Yuezun Li and Siwei Lyu.
\newblock Exposing deepfake videos by detecting face warping artifacts.
\newblock {\em arXiv preprint arXiv:1811.00656}, 2018.

\bibitem{li2020celeb}
Yuezun Li, Xin Yang, Pu Sun, Honggang Qi, and Siwei Lyu.
\newblock Celeb-df: A large-scale challenging dataset for deepfake forensics.
\newblock In {\em Proceedings of the IEEE/CVF Conference on Computer Vision and
  Pattern Recognition}, pages 3207--3216, 2020.

\bibitem{liu2005recognizing}
Chang~Hong Liu, Charles~A Collin, Reza Farivar, and Avi Chaudhuri.
\newblock Recognizing faces defined by texture gradients.
\newblock {\em Perception \& psychophysics}, 67(1):158--167, 2005.

\bibitem{liu2018image}
Yaqi Liu, Qingxiao Guan, Xianfeng Zhao, and Yun Cao.
\newblock Image forgery localization based on multi-scale convolutional neural
  networks.
\newblock In {\em Proceedings of the 6th ACM Workshop on Information Hiding and
  Multimedia Security}, pages 85--90, 2018.

\bibitem{lyu2020deepfake}
Siwei Lyu.
\newblock Deepfake detection: Current challenges and next steps.
\newblock In {\em 2020 IEEE International Conference on Multimedia \& Expo
  Workshops (ICMEW)}, pages 1--6. IEEE, 2020.

\bibitem{matern2019exploiting}
Falko Matern, Christian Riess, and Marc Stamminger.
\newblock Exploiting visual artifacts to expose deepfakes and face
  manipulations.
\newblock In {\em 2019 IEEE Winter Applications of Computer Vision Workshops
  (WACVW)}, pages 83--92. IEEE, 2019.

\bibitem{nguyen2019capsule}
Huy~H Nguyen, Junichi Yamagishi, and Isao Echizen.
\newblock Capsule-forensics: Using capsule networks to detect forged images and
  videos.
\newblock In {\em ICASSP 2019-2019 IEEE International Conference on Acoustics,
  Speech and Signal Processing (ICASSP)}, pages 2307--2311. IEEE, 2019.

\bibitem{pearl2009causality}
Judea Pearl.
\newblock {\em Causality}.
\newblock Cambridge university press, 2009.

\bibitem{pierrard2007skin}
Jean-S{\'e}bastien Pierrard and Thomas Vetter.
\newblock Skin detail analysis for face recognition.
\newblock In {\em 2007 IEEE conference on computer vision and pattern
  recognition}, pages 1--8. IEEE, 2007.

\bibitem{ronneberger2015u}
Olaf Ronneberger, Philipp Fischer, and Thomas Brox.
\newblock U-net: Convolutional networks for biomedical image segmentation.
\newblock In {\em International Conference on Medical image computing and
  computer-assisted intervention}, pages 234--241. Springer, 2015.

\bibitem{rossler2019faceforensics++}
Andreas Rossler, Davide Cozzolino, Luisa Verdoliva, Christian Riess, Justus
  Thies, and Matthias Nie{\ss}ner.
\newblock Faceforensics++: Learning to detect manipulated facial images.
\newblock In {\em Proceedings of the IEEE International Conference on Computer
  Vision}, pages 1--11, 2019.

\bibitem{sabir2019recurrent}
Ekraam Sabir, Jiaxin Cheng, Ayush Jaiswal, Wael AbdAlmageed, Iacopo Masi, and
  Prem Natarajan.
\newblock Recurrent convolutional strategies for face manipulation detection in
  videos.
\newblock {\em Interfaces (GUI)}, 3(1), 2019.

\bibitem{suter2019robustly}
Raphael Suter, Djordje Miladinovic, Bernhard Sch{\"o}lkopf, and Stefan Bauer.
\newblock Robustly disentangled causal mechanisms: Validating deep
  representations for interventional robustness.
\newblock In {\em International Conference on Machine Learning}, pages
  6056--6065, 2019.

\bibitem{tan2019efficientnet}
Mingxing Tan and Quoc~V Le.
\newblock Efficientnet: Rethinking model scaling for convolutional neural
  networks.
\newblock {\em arXiv preprint arXiv:1905.11946}, 2019.

\bibitem{tanaka2016parts}
James~W Tanaka and Diana Simonyi.
\newblock The “parts and wholes” of face recognition: A review of the
  literature.
\newblock {\em Quarterly Journal of Experimental Psychology},
  69(10):1876--1889, 2016.

\bibitem{tolosana2020deepfakes}
Ruben Tolosana, Ruben Vera-Rodriguez, Julian Fierrez, Aythami Morales, and
  Javier Ortega-Garcia.
\newblock Deepfakes and beyond: A survey of face manipulation and fake
  detection.
\newblock {\em arXiv preprint arXiv:2001.00179}, 2020.

\bibitem{wang2020video}
Yaohui Wang and Antitza Dantcheva.
\newblock A video is worth more than 1000 lies. comparing 3dcnn approaches for
  detecting deepfakes.
\newblock In {\em FG'20, 15th IEEE International Conference on Automatic Face
  and Gesture Recognition, May 18-22, 2020, Buenos Aires, Argentina.}, 2020.

\bibitem{xie2020self}
Qizhe Xie, Minh-Thang Luong, Eduard Hovy, and Quoc~V Le.
\newblock Self-training with noisy student improves imagenet classification.
\newblock In {\em Proceedings of the IEEE/CVF Conference on Computer Vision and
  Pattern Recognition}, pages 10687--10698, 2020.

\bibitem{yang2019exposing}
Xin Yang, Yuezun Li, and Siwei Lyu.
\newblock Exposing deep fakes using inconsistent head poses.
\newblock In {\em ICASSP 2019-2019 IEEE International Conference on Acoustics,
  Speech and Signal Processing (ICASSP)}, pages 8261--8265. IEEE, 2019.

\bibitem{zhang2009local}
Baochang Zhang, Yongsheng Gao, Sanqiang Zhao, and Jianzhuang Liu.
\newblock Local derivative pattern versus local binary pattern: face
  recognition with high-order local pattern descriptor.
\newblock {\em IEEE transactions on image processing}, 19(2):533--544, 2009.

\bibitem{zhou2017two}
Peng Zhou, Xintong Han, Vlad~I Morariu, and Larry~S Davis.
\newblock Two-stream neural networks for tampered face detection.
\newblock In {\em 2017 IEEE Conference on Computer Vision and Pattern
  Recognition Workshops (CVPRW)}, pages 1831--1839. IEEE, 2017.

\bibitem{zhou2018learning}
Peng Zhou, Xintong Han, Vlad~I Morariu, and Larry~S Davis.
\newblock Learning rich features for image manipulation detection.
\newblock In {\em Proceedings of the IEEE Conference on Computer Vision and
  Pattern Recognition}, pages 1053--1061, 2018.

\end{thebibliography}
}

\newpage



\end{document}